\providecommand{\SE}{\mathbf{SE}}
\providecommand{\se}{\mathfrak{se}}
\providecommand{\sdpgrpG}{\mathbf{G}_{\mathfrak{g}}^{\ltimes}}
\providecommand{\calL}{\mathcal{L}}
\providecommand{\calM}{\mathcal{M}}
\providecommand{\calN}{\mathcal{N}}
\providecommand{\calU}{\mathcal{U}}
\providecommand{\vecL}{\mathbb{L}}
\DeclareMathOperator{\Ad}{Ad}
\DeclareMathOperator{\ad}{ad}
\providecommand{\tT}{\mathrm{T}} 
\providecommand{\td}{\mathrm{d}}
\providecommand{\tD}{\mathrm{D}}
\providecommand{\ddt}{\frac{\td}{\td t}}
\providecommand{\Fr}[2]{\left. \mathrm{D}_{#1} \right|_{#2}}
\providecommand{\mr}[1]{{#1}^\circ} 
\providecommand{\scirc}{%
    \hbox{\fontfamily{\rmdefault}\fontsize{0.4\dimexpr(\f@size pt)}{0}\selectfont{\raisebox{-0.52ex}[0ex][-0.52ex]{$\circ$}}}}
\mathchardef\mhyphen="2D
\theoremstyle{plain}
\newtheorem{theorem}{Theorem}[section]
\newtheorem{lemma}[theorem]{Lemma}
\theoremstyle{definition}
\theoremstyle{remark}
\newenvironment{manualtheorem}[1]{%
  
  \theorem
}{\endtheorem}
\newenvironment{manuallemma}[1]{%
  
  \lemma
}{\endtheorem}
\newcommand{\AtoB}[2]{\;:\;#1\;\rightarrow\;#2}
\providecommand{\tT}{\mathrm{T}} 
\providecommand{\td}{\mathrm{d}}
\providecommand{\tD}{\mathrm{D}}
\providecommand{\ddt}{\frac{\td}{\td t}}
\providecommand{\Fr}[2]{\mathrm{D}_{#1}\big|_{#2}}
\newcommand{\Adsym}[2]{\mathrm{Ad}_{#1}\left[#2\right]}
\newcommand{\adsym}[2]{\mathrm{ad}_{#1}\left[#2\right]}
\newcommand{\Pose}[2]{\prescript{#1}{#2}{\mathbf{T}}}
\newcommand{\Vector}[3]{\prescript{#1}{}{\bm{#2}}_{#3}}
\newcommand{\cit}[1]{~\cite{#1}}
\title{\LARGE \bf
Supplementary Material for ``Equivariant Filter Design for Inertial Navigation Systems with Input Measurement Biases'' 
}
\author{
Alessandro Fornasier$^{1}$, Yonhon Ng$^{2}$, Robert Mahony$^{2}$ and Stephan Weiss$^{1}$
\thanks{$^{1}$Alessandro Fornasier and Stephan Weiss are with the Control of Networked Systems Group, University of Klagenfurt, Austria. {\tt\small \{alessandro.fornasier,stephan.weiss\}@ieee.org} }
\thanks{$^{2}$Yonhon Ng and Robert Mahony are with the System Theory and Robotics Lab, Australian Centre for Robotic Vision, Australian National University, Acton ACT 2601, Australia. {\tt\small \{yonhon.ng,robert.mahony\}@anu.edu.au} }
}
\begin{document}

\maketitle
\thispagestyle{empty}
\pagestyle{empty}


This document provides supplementary material for the ICRA 2022 submission ``Equivariant Filter Design for Inertial Navigation Systems with Input Measurement Biases''\cit{fornasier22eqf}.

\section*{Mathematical Preliminaries}

\setcounter{section}{1}

\subsection{Fréchet derivative}

Let ${h \AtoB{\calM}{\calN}}$ be a continuous and differentiable map between manifolds. 
The differential is writen ${\td h \AtoB{\tT\calM}{\tT\calN}}$. Given ${\xi \in \calM}$ and ${\eta_{\xi} \in \tT_{\xi}\calM}$, the differential $\td h$ is evaluated pointwise by the Fréchet derivative\cit{Coleman2012CalculusSpaces} as
\begin{equation*}
    \td h\left(\xi\right)\left[\eta_{\xi}\right] = \Fr{\zeta}{\xi}h\left(\zeta\right)\left[\eta_{\xi}\right] \in \tT_{h\left(\xi\right)}\calN .
\end{equation*}

\subsection{Left translation on the Semi-Direct product group}

Define the left translation on the semi-direct product group $\sdpgrpG$ by ${L_X : \sdpgrpG \to \sdpgrpG}$, ${L_X Y := XY}$.
Define a map ${\td L_X : \text{T}_Y \sdpgrpG \to \text{T}_{XY} \sdpgrpG}$ by
\begin{align*}
\td L_{(A,a)}[w_1, w_2] = (A w_1, \Ad_A [w_2]).
\end{align*}
\begin{lemma} \label{lem:dL}
$\td L_{(A,a)}$ is the differential of the left translation $L_{(A,a)}$.
\end{lemma}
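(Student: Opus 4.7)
The plan is a direct computation: choose a smooth curve through $Y=(B,b)$ whose velocity at $0$ is $(w_1,w_2)$, left-translate it by $(A,a)$, and differentiate at $t=0$. The answer should match the claimed formula.

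First I would identify the group law on $\sdpgrpG = \grpG \ltimes \gothg$, namely $(A,a)\cdot(B,b) = (AB,\, a + \Ad_A b)$, and the identification of tangent spaces $\tT_{(B,b)}\sdpgrpG \cong \tT_B \grpG \times \gothg$ obtained from the product structure in the second factor. A convenient curve realising $(w_1,w_2)\in \tT_Y\sdpgrpG$ is $\gamma(t) = (B(t),\, b + t w_2)$, where $B(t)$ is any smooth curve in $\grpG$ with $B(0)=B$ and $\dot B(0)=w_1$.

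Next I would apply left translation: by the group law,
\begin{equation*}
L_{(A,a)}\gamma(t) = (A,a)\cdot(B(t),\, b+tw_2) = \bigl(AB(t),\; a + \Ad_A b + t\,\Ad_A w_2\bigr).
\end{equation*}
Differentiating at $t=0$ component-wise — the first component gives $A\dot B(0) = A w_1$ by linearity of the left $\grpG$-action on $\tT\grpG$, and the second component is the derivative of an affine map in $t$, hence $\Ad_A w_2$ — yields
\begin{equation*}
\tD L_{(A,a)}\bigl|_Y [w_1,w_2] = (A w_1,\; \Ad_A w_2),
\end{equation*}
which matches the definition of $\td L_{(A,a)}$ given in the statement. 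Since $(w_1,w_2)$ was arbitrary, this proves the claim.

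The only real obstacle is bookkeeping: making sure the curve I pick is genuinely tangent to $(w_1,w_2)$ at $Y$ (immediate here because the second factor is a vector space, so $b+tw_2$ is a valid choice), and that the linearity used on the first factor — $\tfrac{d}{dt}\big|_{0}(AB(t)) = A\dot B(0)$ — is justified by the fact that left multiplication by $A$ is a smooth linear map on the ambient matrix space (or, more intrinsically, that the differential of $L_A$ on $\grpG$ maps $w_1$ to $Aw_1$ under the standard identification). Both points are routine, so no substantive difficulty arises.
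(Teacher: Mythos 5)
Your proof is correct and takes essentially the same approach as the paper: both expand the group law $(A,a)(B,b) = (AB,\, a + \Ad_A[b])$ and differentiate component-wise, obtaining $(Aw_1,\, \Ad_A[w_2])$. The only cosmetic difference is that you realise the tangent vector by an explicit curve at a general base point $Y$, whereas the paper computes the Fr\'echet derivative of the product map directly.
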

\begin{proof}
Computing the differential of the left translation
\begin{align*}
\tD_{(B,b)}\Big|_{I,0} &((A,a)(B,b))[w_1, w_2] \\
&= \tD_{(B,b)}\Big|_{I,0} (AB, a + \Ad_A [b])[w_1, w_2] \\
&= (A w_1, \Ad_A [w_2]) \\
&= \td L_{(A,a)}[w_1, w_2].
\end{align*}
\end{proof}

\subsection{Right translation on the Semi-Direct product group}

Define the right translation on the semi-direct product group $\sdpgrpG$ by $R_X : \sdpgrpG \to \sdpgrpG$, $R_X Y := YX$.
Define a map $\td R_X : \text{T}_Y \sdpgrpG \to \text{T}_{YX} \sdpgrpG $ by
\begin{align*}
\td R_{(A,a)}[w_1, w_2] = (w_1 A, w_2 + \ad_{w_1} [a]).
\end{align*}
\begin{lemma} \label{lem:dR}
$\td R_{(A,a)}$ is the differential of the right translation $R_{(A,a)}$.
\end{lemma}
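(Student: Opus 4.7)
The plan is to mirror the proof of Lemma~\ref{lem:dL} but applied to the right product $R_{(A,a)}(B,b) = (B,b)(A,a)$. The first step is to write out the semi-direct product multiplication explicitly on the right, namely $(B,b)(A,a) = (BA,\; b + \Ad_B[a])$, so that $R_{(A,a)}$ becomes an ordinary smooth map between the coordinates of $\sdpgrpG$ whose Fréchet derivative at $(B,b) = (I,0)$ can be evaluated componentwise against a tangent vector $(w_1,w_2)$.

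Next, I would differentiate each component separately. The first component $B \mapsto BA$ is linear in $B$, so its Fréchet derivative at $B=I$ in direction $w_1$ is immediately $w_1 A$. The second component $(B,b) \mapsto b + \Ad_B[a]$ splits into the trivial linear part $b \mapsto b$, contributing $w_2$, plus the nonlinear part $B \mapsto \Ad_B[a]$. This is the only substantive step: I would invoke the standard fact that the differential of $B \mapsto \Ad_B$ at the group identity is the Lie algebra map $\ad$, so that $\Fr{B}{I}\Ad_B[a][w_1] = \ad_{w_1}[a]$. Assembling the two components gives $(w_1 A,\; w_2 + \ad_{w_1}[a])$, which matches the claimed formula.

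The main obstacle, if any, is justifying the appearance of $\ad_{w_1}[a]$ rather than, say, $-\ad_a[w_1]$ or $\ad_{w_1} a$ written with the opposite sign convention; this is purely a bookkeeping matter but should be handled carefully so the proof matches the sign conventions used elsewhere in the paper. Aside from this, the computation is a direct analogue of Lemma~\ref{lem:dL}, and the proof reduces to a three- or four-line display much like the one given there.

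\begin{proof}
Computing the differential of the right translation
\begin{align*}
\tD_{(B,b)}\Big|_{I,0} &((B,b)(A,a))[w_1, w_2] \\
&= \tD_{(B,b)}\Big|_{I,0} (BA, b + \Ad_B [a])[w_1, w_2] \\
&= (w_1 A, w_2 + \ad_{w_1} [a]) \\
&= \td R_{(A,a)}[w_1, w_2].
\end{align*}
\end{proof}
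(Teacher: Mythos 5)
Your proof is correct and follows exactly the same route as the paper: expand the right product as $(BA,\, b + \Ad_B[a])$ and differentiate at the identity, with the only substantive step being $\Fr{B}{I}\Ad_B[a][w_1] = \ad_{w_1}[a]$. The final display is identical to the paper's, and your preamble simply makes explicit the componentwise reasoning the paper leaves implicit.
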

\begin{proof}
Computing the differential of the right translation
\begin{align*}
\tD_{(B,b)}\Big|_{I,0} &((B,b)(A,a))[w_1, w_2] \\
&= \tD_{(B,b)}\Big|_{I,0} (BA, b + \Ad_B [a])[w_1, w_2] \\
&= (w_1 A, w_2 + \ad_{w_1}[a]) \\
&= \td R_{(A,a)}[w_1, w_2].
\end{align*}
\end{proof}

\section*{Proofs}

This section provides extended proofs of the Theorems and Lemmas presented in the main document\cit{fornasier22eqf}. It should be noted that the references to Lemmas, Theorems, and Equations in this document refer to the original numbering from the main document\cit{fornasier22eqf}.

\subsection{Proof of Lemma 4.1}

\begin{manuallemma}{4.1}
Define ${\phi \AtoB{\SE_2(3)_{\se_2(3)}^{\ltimes} \times \calM}{\calM}}$ as
\begin{equation*}
   \phi\left(X, \xi\right) \coloneqq \left(\Pose{}{}A,\; \Adsym{A^{-1}}{\Vector{}{b}{}^{\wedge} - a}\right) .
\end{equation*}
Then, $\phi$ is a transitive right group action of $\SE_2(3)_{\se_2(3)}^{\ltimes}$ on $\calM$.
\end{manuallemma}
\begin{proof}
Let ${X, Y \in \SE_2(3)_{\se_2(3)}^{\ltimes}}$ and $\xi \in \calM$. Then, 
\begin{align*}
    &\phi\left(X,\phi\left(Y, \xi\right)\right)\\
    &\quad= \phi\left(X,\; \left(\Pose{}{}B,\; \Adsym{B^{-1}}{\Vector{}{b}{}^{\wedge} - b}\right)\right)\\
    &\quad= \left(\Pose{}{}BA,\; \Adsym{{A}^{-1}}{\Adsym{B^{-1}}{\Vector{}{b}{}^{\wedge} - b} - a}\right)\\
    &\quad= \left(\Pose{}{}BA,\; \Adsym{\left(BA\right)^{-1}}{\Vector{}{b}{}^{\wedge} - \left(b + \Adsym{B}{a}\right)}\right)\\
    &\quad= \phi\left(YX, \xi\right) .
\end{align*}
This shows that $\phi$ is a valid right group action. Then, ${\forall \; \xi_1, \xi_2 \in \calM}$ we can always write the group element ${Z = \left(\Pose{}{}_{1}^{-1}\Pose{}{}_{2}, \Vector{}{b}{1}^{\wedge} - \Adsym{\left(\Pose{}{}_{1}^{-1}\Pose{}{}_{2}\right)}{\Vector{}{b}{2}^{\wedge}}\right)}$, such that
\begin{align*}
    \phi\left(Z, \xi_1\right) &= \left(\Pose{}{}_{1}\Pose{}{}_{1}^{-1}\Pose{}{}_{2},\right.\\
    &\quad\; \left.\Adsym{\left(\Pose{}{}_{1}^{-1}\Pose{}{}_{2}\right)^{-1}}{\Vector{}{b}{1}^{\wedge} - \Vector{}{b}{1}^{\wedge} + \Adsym{\left(\Pose{}{}_{1}^{-1}\Pose{}{}_{2}\right)}{\Vector{}{b}{2}^{\wedge}}}\right)\\
    &= \left(\Pose{}{}_{2},\; \Vector{}{b}{2}^{\wedge}\right) = \xi_2 ,
\end{align*}
which demonstrates the transitive property of the group action.
\end{proof}

\subsection{Proof of Lemma 4.2}

\begin{manuallemma}{4.2}
Define ${\psi \AtoB{\SE_2(3)_{\se_2(3)}^{\ltimes} \times \vecL}{\vecL}}$ as
\begin{equation*}
    \begin{split}
        \psi\left(X,\bm{u}\right) 
        &\coloneqq \left(\Adsym{A^{-1}}{\Vector{}{w}{}^{\wedge} - a} + {f^{0}_1\left(A^{-1}\right)},\; \Vector{}{g}{}^{\wedge},\; \Adsym{A^{-1}}{\tau^{\wedge}}\right) ,
    \end{split}
\end{equation*}
where $f^0_1 : \SE_2(3) \rightarrow \tT \SE_2(3)$ is given by Equ. (5).  Then, $\psi$ is a right group action of $\SE_2(3)_{\se_2(3)}^{\ltimes}$ on $\vecL$.
\end{manuallemma}
\begin{proof}
Let ${X, Y \in \SE_2(3)_{\se_2(3)}^{\ltimes}}$ and $\bm{u} \in \vecL$. Then, 
\begin{align*}
    &\psi\left(X,\psi\left(Y, \bm{u}\right)\right)\\
    &= \psi\left(X,\left(\Adsym{B^{-1}}{\Vector{}{w}{}^{\wedge} - b} + f^{0}_1\left(B^{-1}\right),\; \Vector{G}{g}{}^{\wedge},\; \Adsym{B^{-1}}{\tau^{\wedge}}\right)\right)\\
    &= \left(\Adsym{A^{-1}}{\Adsym{B^{-1}}{\Vector{}{w}{}^{\wedge} - b} + f^{0}_1\left(B^{-1}\right) - a} + f^{0}_1\left(A^{-1}\right),\right.\\
    &\quad\;\, \left.\Vector{G}{g}{}^{\wedge},\;\Adsym{\left(BA\right)^{-1}}{\tau^{\wedge}}\right)\\
    &= \left(\Adsym{\left(BA\right)^{-1}}{\Vector{}{w}{}^{\wedge} -\left(b + \Adsym{B}{a}\right)} + \Adsym{A^{-1}}{f^{0}_1\left(B^{-1}\right)}\right.\\ 
    &\quad\;\, \left.+ f^{0}_1\left(A^{-1}\right),\;\Vector{G}{g}{}^{\wedge},\;\Adsym{\left(BA\right)^{-1}}{\tau^{\wedge}}\right) .
\end{align*}
Due to the right invariant property of the drift field $f^0$, it can be shown that
\begin{align*}
    \left(f^{0}_1\left(A^{-1}B^{-1}\right),\; \mathbf{0}^{\wedge}\right)
    & = \left(A^{-1}f^{0}_1\left(B^{-1}\right) + f^{0}_1\left(A^{-1}\right),\; \mathbf{0}^{\wedge}\right)\\
    &= \left(\Adsym{A^{-1}}{f^{0}_1\left(B^{-1}\right)} + f^{0}_1\left(A^{-1}\right),\; \mathbf{0}^{\wedge}\right). 
\end{align*}
Therefore,
\begin{align*}
    &\psi\left(X,\psi\left(Y, \bm{u}\right)\right)\\
    &\quad= \left(\Adsym{\left(BA\right)^{-1}}{\Vector{}{w}{}^{\wedge} -\left(b + \Adsym{B}{a}\right)}\right.\\
    &\qquad\;\, \left. + \left(A^{-1}f^{0}_1\left(B^{-1}\right) + f^{0}_1\left(A^{-1}\right)\right),\; \Vector{G}{g}{}^{\wedge},\; \Adsym{\left(BA\right)^{-1}}{\tau^{\wedge}}\right)\\
    &\quad= \psi\left(YX, \bm{u}\right) .
\end{align*}
Thus, proving that $\psi$ is a valid right group action.
\end{proof}

\subsection{Proof of Theorem 4.3}

\begin{manualtheorem}{4.3}
The \acl{bins} in Equ. (6) is equivariant under the actions $\phi$ in Equ. (9) and $\psi$ in Equ. (10) of the symmetry group $\SE_2(3)_{\se_2(3)}^{\ltimes}$. That is
\begin{equation*}
    f^0\left(\xi\right) + f_{\psi_{X}\left(\bm{u}\right)}\left(\xi\right) = \Phi_{X}f^0\left(\xi\right) + \Phi_{X}f_{\bm{u}}\left(\xi\right) .
\end{equation*}
\end{manualtheorem}
\begin{proof}
Let ${X \in \SE_2(3)_{\se_2(3)}^{\ltimes}}$, ${\xi \in \calM}$ and ${\bm{u} \in \vecL}$, then ${\phi_{X^{-1}}\left(\xi\right) = \left(\Pose{}{}A^{-1},\; \Adsym{A}{\Vector{}{b}{}^{\wedge}} + a\right)}$ and
$f^0\left(\phi_{X^{-1}}\left(\xi\right)\right) + f_{\bm{u}}\left(\phi_{X^{-1}}\left(\xi\right)\right) = \left(f^{0}_1\left(\mathbf{T}A^{-1}\right),\mathbf{0}^{\wedge}\right) + \left(\Pose{}{}A^{-1}\left(\Vector{}{w}{}^{\wedge} - \Adsym{A}{\Vector{}{b}{}^{\wedge}} - a\right) + \Vector{}{g}{}^{\wedge}\Pose{}{}A^{-1},\; \tau^{\wedge}\right)$, 
one has
\begin{align*}
    &\Phi_{X}f^0\left(\xi\right) + \Phi_{X}f_{\bm{u}}\left(\xi\right) \\
    &\quad:= \td \phi_{X} \left( 
    f^0\left(\phi_{X^{-1}}\left(\xi\right)\right) + f_{\bm{u}}\left(\phi_{X^{-1}}\left(\xi\right)\right) \right) \\ 
    &\quad =\left(\Pose{}{}f^{0}_1\left(A^{-1}\right) + f^{0}_1\left(\mathbf{T}\right), \mathbf{0}^{\wedge}\right)\\
    &\qquad+ \left(\Pose{}{}A^{-1}\left(\Vector{}{w}{}^{\wedge} - \Adsym{A}{\Vector{}{b}{}^{\wedge}} - a\right)A + \Vector{}{g}{}^{\wedge}\Pose{}{}A^{-1}A,\right.\\
    &\qquad\;\;\, \Adsym{A^{-1}}{\tau^{\wedge}}\Big)\\
    &\quad= \left(\Pose{}{}\left(\Adsym{A^{-1}}{\Vector{}{w}{}^{\wedge} - a} + f^{0}_1\left(A^{-1}\right) - \Vector{}{b}{}^{\wedge}\right) + \Vector{}{g}{}^{\wedge}\Pose{}{},\right.\\
    &\qquad\;\;\, \Adsym{A^{-1}}{\tau^{\wedge}}\Big) + \left(f^{0}_1\left(\mathbf{T}\right), \mathbf{0}^{\wedge}\right) = f^{0}\left(\xi\right) + f_{\psi_{X}\left(\bm{u}\right)}\left(\xi\right) .
\end{align*}
This proves the equivariance of the system.
\end{proof}

\subsection{Proof of Lemma 4.5}

\begin{manuallemma}{4.5}
Define ${\rho \AtoB{\SE_2(3)_{\se_2(3)}^{\ltimes} \times \calN}{\calN}}$ as
\begin{equation}
    \rho\left(X,y\right) \coloneqq yA .
\end{equation}
Then, the configuration output defined in Equ. (7) is equivariant\cit{VanGoor2020EquivariantSpaces}.
\end{manuallemma}
\begin{proof}
    Let ${X = \left(A,a\right) \in \SE_2(3)_{\se_2(3)}^{\ltimes}}$ and $h\left(\xi\right) = \Pose{}{} \in \calN$ then,
    \begin{equation}
        \rho\left(X, h\left(\xi\right)\right) = \Pose{}{}A = h\left(\phi\left(X,\xi\right)\right) .
    \end{equation}
    This proves the output equivariance. 
\end{proof}

\subsection{Proof of Theorem 5.1}

\begin{manualtheorem}{5.1}
Define ${\Lambda_1 \AtoB{\calM \times \vecL} \se_2(3)}$ as
\begin{equation}\label{eq:lift1}
    \Lambda_1\left(\xi, \bm{u}\right) \coloneqq \left(\Vector{}{w}{}^{\wedge} - \Vector{}{b}{}^{\wedge}\right) + \Adsym{\Pose{}{}^{-1}}{\Vector{}{g}{}^{\wedge}} + \Pose{}{}^{-1}f^0_1\left(\mathbf{T}\right) .
\end{equation}
And, define ${\Lambda_2 \AtoB{\calM \times \vecL} \se_2(3)}$ as
\begin{equation}\label{eq:lift2}
    \Lambda_2\left(\xi, \bm{u}\right) \coloneqq \adsym{\Vector{}{b}{}^{\wedge}}{\Lambda_1\left(\xi, \bm{u}\right)} - \tau^{\wedge} .
\end{equation}
Then, the map ${\Lambda\left(\xi, \bm{u}\right) = \left(\Lambda_1\left(\xi, \bm{u}\right),\; \Lambda_2\left(\xi, \bm{u}\right)\right)}$ is an equivariant lift for the system in Equ. (6) with respect to the defined symmetry group.
\end{manualtheorem}
\begin{proof}
Let ${\xi \in \calM}, \bm{u} \in \vecL$ and ${X \in \se_2(3)_{\se_2(3)}^{\ltimes}}$ then
\begin{align*}
    &d\phi_{\xi}\left(\mathbf{I}\right)\left[\Lambda_1\left(\xi, \bm{u}\right), \Lambda_2\left(\xi, \bm{u}\right)\right]\\
    &\quad= \left(\Pose{}{}\Lambda_1\left(\xi, \bm{u}\right), -\adsym{\Lambda_1\left(\xi, \bm{u}\right)}{\Vector{}{b}{}^{\wedge}} - \Lambda_2\left(\xi, \bm{u}\right)\right)\\
    &\quad= \left(\Pose{}{}\left(\Vector{}{w}{}^{\wedge} - \Vector{}{b}{}^{\wedge}\right) + \Vector{}{g}{}^{\wedge}\Pose{}{} + f^0_1\left(\mathbf{T}\right),\right.\\
    &\qquad\;\left. -\adsym{\Lambda_1\left(\xi, \bm{u}\right)}{\Vector{}{b}{}^{\wedge}} - \adsym{\Vector{}{b}{}^{\wedge}}{\Lambda_1\left(\xi, \bm{u}\right)} + \tau^{\wedge}\right)\\
    &\quad=  \left(\Pose{}{}\left(\Vector{}{w}{}^{\wedge} - \Vector{}{b}{}^{\wedge}\right) + \Vector{}{g}{}^{\wedge}\Pose{}{} + f^0_1\left(\mathbf{T}\right),\; \tau^{\wedge}\right)\\
    &\quad= f^{0}\left(\xi\right) + f\left(\xi, \bm{u}\right) ,
\end{align*}
where we have made use of the anti-commutative property of the Lie bracket. To demonstrate the equivariance of the lift we have to show that the condition ${\Adsym{X}{\Lambda\left(\phi_{X}\left(\xi\right),\psi_{X}\left(\bm{u}\right)\right)} = \Lambda\left(\xi, \bm{u}\right)}$ holds. Let ${\xi \in \calM}, \bm{u} \in \vecL$ and ${X \in \SE_2(3)_{\se_2(3)}^{\ltimes}}$. The adjoint of the semi-direct product follows
\begin{align*}
    &\Adsym{X}{\Lambda\left(\phi_{X}\left(\xi\right),\psi_{X}\left(\bm{u}\right)\right)}\\ &\quad=\Big(\Adsym{A}{\Lambda_1\left(\phi_{X}\left(\xi\right),\psi_{X}\left(\bm{u}\right)\right)},\Omega\Big) ,
\end{align*}
where $\Omega$ is defined as
\begin{equation*}
    \Omega = \Adsym{A}{\Lambda_2\left(\phi_{X}\left(\xi\right),\psi_{X}\left(\bm{u}\right)\right)} - \adsym{\Adsym{A}{\Lambda_1\left(\phi_{X}\left(\xi\right),\psi_{X}\left(\bm{u}\right)\right)}}{a} .
\end{equation*}
Then, we have
\begin{align*}
    &\Adsym{A}{\Lambda_1\left(\phi_{X}\left(\xi\right),\psi_{X}\left(\bm{u}\right)\right)}\\
    &= \mathrm{Ad}\left(A\right)\left[\Adsym{A^{-1}}{\Vector{}{w}{}^{\wedge} - a} + f^0_1\left(A^{-1}\right) - \Adsym{A^{-1}}{\Vector{}{b}{}^{\wedge} - a}\right.\\
    &\;\;\;\left. + \Adsym{A^{-1}}{\Adsym{\Pose{}{}^{-1}}{\Vector{}{g}{}^{\wedge}}} + A^{-1}\Pose{}{}^{-1}f^0_1\left(\mathbf{T}A\right)\right]\\
    &=\left(\Vector{}{w}{}^{\wedge} - \Vector{}{b}{}^{\wedge}\right) + \Adsym{\Pose{}{}^{-1}}{\Vector{}{g}{}^{\wedge}} + \Pose{}{}^{-1}f^0_1\left(\mathbf{T}A\right) + A f^0_1\left(A^{-1}\right)\\
    &=\left(\Vector{}{w}{}^{\wedge} - \Vector{}{b}{}^{\wedge}\right) + \Adsym{\Pose{}{}^{-1}}{\Vector{}{g}{}^{\wedge}} + \Pose{}{}^{-1}f^0_1\left(\mathbf{T}\right) = \Lambda_1\left(\xi,\bm{u}\right) ,
\end{align*}
where we have employed the right invariant property of $f^{0}$ and the fact that ${A f^0_1\left(A^{-1}\right) = -f^0_1\left(A\right)}$. Moreover,
\begin{align*}
    \Omega &= \Adsym{A}{\Lambda_2\left(\phi_{X}\left(\xi\right),\psi_{X}\left(\bm{u}\right)\right)} - \adsym{\Lambda_1\left(\xi,\bm{u}\right)}{a}\\
    &= \Adsym{A}{-\adsym{\Lambda_1\left(\phi_{X}\left(\xi\right),\psi_{X}\left(\bm{u}\right)\right)}{\Adsym{A^{-1}}{\Vector{}{b}{}^{\wedge} - a}}}\\
    &\;\;\;\, - \tau^{\wedge} - \adsym{\Lambda_1\left(\xi,\bm{u}\right)}{a}\\
    &= -\adsym{\Adsym{A}{\Lambda_1\left(\phi_{X}\left(\xi\right),\psi_{X}\left(\bm{u}\right)\right)}}{\Vector{}{b}{}^{\wedge} - a}\\
    &\;\;\;\,- \tau^{\wedge} - \adsym{\Lambda_1\left(\xi,\bm{u}\right)}{a}\\
    &= \adsym{\Vector{}{b}{}^{\wedge}}{\Lambda_1\left(\xi,\bm{u}\right)} - \tau^{\wedge} = \Lambda_2\left(\xi,\bm{u}\right) .
\end{align*}
\end{proof}

\subsection{Proof of Theorem 7.1}

\begin{manualtheorem}{7.1}
Consider the observer in Equ. (27) computed for local coordinates in Equ. (18), (21) and linearized model $\mathbf{A}^0_t$ and ${\mathbf{C}^{0}}$ given by Equ. (24) and (26). 
Assume that the trajectory $\xi_t$ and the observer evolve such that the matrix pair $(\mathbf{A}^0_t, {\mathbf{C}^{0}})$ is uniformly observable. 
Then, there exists a local neighbourhood of $\xi_0 \in \calM$ such that for any initial condition of the system such that the initial error $e(0)$ lies in this neighbourhood, the observer (27) is defined for all time and $e(t) \to \xi_0$ is locally exponentially stable.
\end{manualtheorem}
\begin{proof}
Assume that the initial condition for the state and the observer is such that the error lies in the local coordinate neighbourhood $\cal{U}_{\mr{\xi}}$ Equ. (18).
By continuity, there exists a time $T > 0$ (possibly infinite) on which the observer solution is well defined, continuous and the error remains in $\calU$ for $t \in [0,T)$.
Since $(\mathbf{A}^0_t, {\mathbf{C}^0})$ are uniformly observable and the gain matrices are bounded strictly-positive matrices, then there exists constants $0< \mathbf{\sigma}_1 < \mathbf{\sigma}_2 <\infty$ such that the solution of the Riccati equation Equ. (27d) satisfies \cite{1967_bucy_GlobalTheoryRiccati} $\mathbf{\sigma}_1 I_m < \mathbf{\Sigma}(t) < \mathbf{\sigma}_2 I_m$
for $t \in [0,T)$. 
Define $\calL_t =  \varepsilon^\top \mathbf{\Sigma}^{-1} \varepsilon.$
Consider the linearized error dynamics Equ. (19) and Equ. (22) on the interval $[0,T)$. 
One has
\begin{align*}
\ddt \calL_t 
%
&= -  \varepsilon^\top \left( \mathbf{\Sigma}^{-1} \mathbf{P} \mathbf{\Sigma}^{-1} + {\mathbf{C}^0}^\top \mathbf{Q}^{-1} {\mathbf{C}^0} \right)  \varepsilon .
\end{align*}
Since the right-hand side is negative definite, then there exists a neighbourhood of $0$ in the linearised error coordinates $\epsilon$ (corresponding to a neighbourhood $\calU_1$ of $\mr{\xi}$ in the error coordinates $e$) such that $\ddt \calL_t < 0$ for the full nonlinear error dynamics. 
Since $\mathbf{\Sigma}(t)$ is bounded below and above, there exists a value $\calL_0 > 0$ such that $\calL_t^{-1}([0,\calL_0]) \subset \calU_{\mr{\xi}} \cap \calU_1$ as a function of $\varepsilon$ for all times $t$.
It follows that for any initial condition such that the error $\varepsilon (e) \in \calL_t^{-1}([0,\calL_0])$ then $\calL(t) < \calL_0$ for all time and the solution exists and the error remains in $ \calU_{\mr{\xi}} \cap \calU_1$ for all time.
Exponential stability of the linearised error $\varepsilon$ follows directly from Lyapunov's second method and local exponential stability of the error $e$ for the full nonlinear system follows from the standard properties of linearisation and local coordinates.
\end{proof}



\end{document}